\documentclass[lettersize,journal]{IEEEtran}
\usepackage{amsmath,amsfonts}
\usepackage{algorithm}
\usepackage{array}
\usepackage[caption=false,font=normalsize,labelfont=sf,textfont=sf]{subfig}
\usepackage{textcomp}
\usepackage{stfloats}
\usepackage{url}
\usepackage{verbatim}
\usepackage{graphicx}
\usepackage{cite}
\hyphenation{op-tical net-works semi-conduc-tor IEEE-Xplore}

\newcommand{\ie}{\textit{i.e.}}
\newcommand{\eg}{\textmd{e.g.}}
\usepackage{times}
\usepackage{epsfig}
\usepackage{graphicx}
\usepackage{amsmath}
\usepackage{amssymb}
\usepackage{adjustbox}

\usepackage{tabularx}
\usepackage[english]{babel}
\usepackage{amsthm}
\usepackage{bbold}

\usepackage[pagebackref=true,breaklinks=true,letterpaper=true,colorlinks,bookmarks=false]{hyperref}

\usepackage{graphicx}
\usepackage{amsmath}
\usepackage{amssymb}
\usepackage{booktabs}
\usepackage{subcaption}
\usepackage{bbding}
\usepackage{pifont}
\usepackage{wasysym}
\usepackage{amssymb}
\usepackage{graphicx}
\usepackage{amsthm}
\usepackage{dsfont}
\usepackage{amsfonts}
\usepackage{bm}
\usepackage{threeparttable}
\usepackage{multirow}
\usepackage{booktabs}
\usepackage{algpseudocode}%

\newtheorem{definition}{Definition}[section]

\newtheorem{theorem}{Theorem}

\newtheorem{assumption}{Assumption}

\begin{document}

\title{INSURE: an Information theory iNspired diSentanglement and pURification\\modEl for domain generalization}

\author{Xi Yu, Huan-Hsin Tseng, Shinjae Yoo, Haibin Ling, Yuewei Lin*
\thanks{X. Yu, H.~H. Tseng, S. Yoo and Y. Lin are with Computational Science Initiative, Brookhaven National Laboratory, Upton, NY, USA.}
\thanks{H. Ling is with the Department of Computer Science, Stony Brook University, Stony Brook, NY, USA.}
\thanks{*Y. Lin is the corresponding author.}
}

\markboth{Journal of \LaTeX\ Class Files,~Vol.~14, No.~8, August~2021}%
{Shell \MakeLowercase{\textit{et al.}}: A Sample Article Using IEEEtran.cls for IEEE Journals}


\maketitle

\begin{abstract}
Domain Generalization (DG) aims to learn a generalizable model on the unseen target domain by only training on the multiple observed source domains. Although a variety of DG methods have focused on extracting domain-invariant features, the domain-specific class-relevant features have attracted attention and been argued to benefit generalization to the unseen target domain. To take into account the class-relevant domain-specific information, in this paper we propose an \textbf{I}nformation theory i\textbf{N}spired di\textbf{S}entanglement and p\textbf{UR}ification  mod\textbf{E}l (INSURE) to explicitly disentangle the latent features to obtain sufficient and compact (necessary) class-relevant feature for generalization to the unseen domain. Specifically, we first propose an information theory inspired loss function to ensure the disentangled class-relevant features contain sufficient class label information and the other disentangled auxiliary feature has sufficient domain information. We further propose a paired purification loss function to let the auxiliary feature discard all the class-relevant information and thus the class-relevant feature will contain sufficient and compact (necessary) class-relevant information. Moreover, instead of using multiple encoders, we propose to use a learnable binary mask as our disentangler to make the disentanglement more efficient and make the disentangled features complementary to each other. We conduct extensive experiments on four widely used DG benchmark datasets including PACS, OfficeHome, TerraIncognita, and DomainNet. The proposed INSURE outperforms the state-of-art methods. We also empirically show that domain-specific class-relevant features are beneficial for domain generalization. 
\end{abstract}

\begin{IEEEkeywords}
Domain generalization, Information theory, Disentangle.
\end{IEEEkeywords}

\section{Introduction}
\label{sec:intro}
A fundamental assumption in most statistical machine learning algorithms is that training data and test data have independent and identical distributions (i.i.d.). However, this assumption does not always hold in real-world applications due to the distribution shift between source data and target data. For instance, a car detector should adapt to different environments (\textit{\eg,} urban to rural shift) and adverse weather conditions (\textit{\eg,} sunny to rainy shift)~\cite{zhou2022domain}. The classic deep learning model often fails to generalize to test data under such out-of-distribution (OOD) scenarios since the i.i.d. assumption is violated.

To mitigate this domain shift problem, domain generalization (DG) was introduced in~\cite{blanchard2011generalizing}. In DG, a model is trained on multiple domains and tested on an unseen target domain. It~\cite{ben2006analysis} has demonstrated that feature representations are general and transferable to different domains if they remain invariant across domains. Motivated by this theory, a plethora of algorithms~\cite{li2017deeper,li2018deep,hu2020domain,ilse2020diva,Chuang0J20} have been proposed to learn the domain-invariant features across the source domains. However, such domain-invariant features may not be sufficient to target generalization. \cite{zhao2019learning} theoretically proved that if the marginal label distributions are significantly different between the source and target domains, the domain-invariant representation will degrade the generalization. In addition, \cite{johansson2019support} demonstrated that the requirement of domain invariance can often be excessively strict and may not always result in consistent estimates.

On the other hand, domain-specific information becomes increasingly popular for aiding the generalization ability. \cite{DingF18} employed multiple domain-specific neural networks and then aligned them together with low-rank constraints. However, too many domain-specific networks make it hard to scale to a large number of source domains. Similarly,~\cite{chattopadhyay2020learning} generated several masks within the network and each mask corresponds to one domain in the training process and then average the prediction obtained from all the individual source domain masks at test time. The main problem is that overlapping penalty is not enough for obtaining domain-specific information. More recently,~\cite{bui2021exploiting} disentangled the latent features in domain-specific and domain-invariant by minimizing the covariance matrix and meta-learning. However, it has a high computational complexity, which contains two encoders and three classifiers and involves covariance matrix calculation in high dimensionality. In addition, previous methods only considered the sufficiency of domain-specific or domain-invariant but ignored removing redundant information.  

In this paper, we propose an \textbf{I}nformation theory i\textbf{N}spired di\textbf{S}entanglement and p\textbf{UR}ification  mod\textbf{E}l (INSURE) to explicitly disentangle the latent features $\mathbf{z}$ to obtain sufficient and compact (necessary) class-relevant feature $\mathbf{z}^*$ for generalization to the unseen domain and an auxiliary feature $\mathbf{z}'$. Spesifically, inspired by information theory, we design a loss function that minimizes the KL divergence between the original feature $\mathbf{z}$ and the disentangled one $\mathbf{z}^*$ to ensure $\mathbf{z}^*$ with sufficient class relevant information. To discard the superfluous domain-specific information from $\mathbf{z}^*$, $\mathbf{z}'$ is learned to contain sufficient domain information by using a similar information theory inspired loss function. To our best knowledge, such loss functions have not been used in previous DG works. We further propose a paired purification loss function to let $\mathbf{z}'$ get rid of all the class relevant information, and therefore to ensure $\mathbf{z}^*$ contains sufficient and necessary (compact) class-relevant information. Instead of using multiple encoders, we propose to use a learnable binary mask as our disentangler to make the disentanglement more efficient and let $\mathbf{z}^*$ and $\mathbf{z}'$ complementary. It is worth mentioning that all of our loss terms are derived by theoretical analysis of the eventual goal (\ie, disentangling the sufficient and compact class relevant features). These loss terms therefore naturally align with our framework and are complementary to each other. 
Our contributions in this work are summarized as follows:
\begin{itemize}
\item We explicitly disentangle the latent features $\mathbf{z}$ to obtain sufficient and compact (necessary) class-relevant feature $\mathbf{z}^*$ for generalization to the unseen domain. We proposed an information theory inspired loss function to ensure $\mathbf{z}^*$ contains sufficient class label information and $\mathbf{z}'$ contains sufficient domain information.
\item We propose a paired purification loss function to let $\mathbf{z}'$ get rid of all the label relevant information, and thus to ensure $\mathbf{z}^*$ contains sufficient and necessary (compact) class-relevant information.
\item Instead of using multiple encoders, we use a learnable binary mask as our disentangler to make the disentanglement more efficient than traditional multiple encoders, and make $\mathbf{z}^*$ and $\mathbf{z}'$ naturally complementary.
\item We conduct extensive experiments on four widely used DG datasets, the proposed INSURE outperforms the state-of-art methods. We further empirically show that domain-specific class-relevant features are beneficial for domain generalization.
\end{itemize}

\section{Related Work}
\label{sec:rela}
\noindent\textbf{Domain Generalization.} There are a large number DG models~\cite{zhou2022domain,wang2022generalizing}, which can be broadly categorized into the following groups: (1) Domain alignment. These methods force the latent representations to have similar distribution across different domains~\cite{sun2016deep,li2018domain,jin2020feature,wang2022contrastive}. (2) Data Augmentation. DG can also be improved by data augmentation. Various techniques utilize different augmentations to simulate the unseen test domain conditions, including domain randomization~\cite{khirodkar2019domain,tobin2017domain,honarvar2020domain,huang2021fsdr}, adversarial data augmentation~\cite{volpi2018generalizing,zhao2020maximum,yang2021adversarial} and data/feature generation~\cite{somavarapu2020frustratingly,shu2021open,xu2021fourier,qiao2021uncertainty,kang2022style,zhou2021domain,nuriel2021permuted,tang2021crossnorm,xia2023generative}. (3) Learning strategy. Several learning strategies including ensemble learning~\cite{zhou2021domaintip, segu2022batch} and meta-learning~\cite{li2018deep} also improve the domain generalization. (4) Disentangled representation Learning. The goal of disentangled representation learning is to decompose a feature representation into understandable compositions (\ie, domain-invariant and domain-specific). \cite{cai2019learning} disentangles latent features in semantic and domain factors to improve performance in domain adaptation. Similarly, \cite{zhang2022towards} jointly learns the semantic and variation encoders for disentanglement and inference based on the invariant semantic features. \cite{nam2021reducing} proposes the style-agnostic networks to disentangle the style from the class categories and reduce the intrinsic style. ~\cite{peng2019domain} disentangles latent features into three parts by using three encoders as the disentangler with reconstruction losses, therefore it is much more complicated compared to our single binary mask disentangler. It disentangles only the domain invariant, class-relevant feature for inference. Moreover,  it disentangles features through adversarial learning and minimizes mutual information between the disentangled features. However, it cannot ensure the desired characteristics of the disentangled features, while our model ensures such characteristics by using information theory. Most recently, \cite{bui2021exploiting} disentangles the latent feature in domain-invariant and domain-specific and makes the final decision based on their concatenation. While our INSURE model falls under the disentangled representation learning, we emphasize our contributions focus on \textbf{what} and \textbf{how} to disentangle. In particular, we \textbf{(1)} obtain sufficient and compact (necessary) class-relevant feature $\mathbf{z}^*$ with the help of an auxiliary feature $\mathbf{z}'$, and \textbf{(2)} use the information theory inspired disentanglement and purification loss functions in a unified framework. Moreover, compared to previous works that typically employ two encoders as disentanglers, INSURE model utilizes a learnable binary mask to disentangle latent features. 
DG techniques have also found applications in various scenarios, including but not limited to few-shot learning~\cite{zhao2023fs}, hyperspectral image classification~\cite{zhang2023single}, and person re-identification~\cite{lin2020multi}.

\noindent\textbf{Information-theoretic learning for DG.} Recently, the information theory-based approaches have been widely used in the domain generalization. \cite{ahuja2021invariance} claims that invariance principle alone is insufficient and incorporating information bottleneck~\cite{tishby2000information} with Invariant Risk Minimization (IRM)~\cite{arjovsky2019invariant} improves the generalization. \cite{li2022invariant} combines information bottleneck and conditional mutual information term to achieve invariant causal prediction. Similarly, \cite{du2020learning} introduces a meta variational information bottleneck to capture the domain-invariant representation. \cite{wang2021learning} focuses on the single domain generalization and synthesizes images from diverse distribution by minimizing the mutual information between source and generated images and maximizing the mutual information among samples belonging to the same category. Most recently, \cite{chuah2022itsa} proposes an information-theoretic approach to improve the generalizability on unseen real data scenarios, which  leverages the robust information bottleneck principle~\cite{pensia2020extracting} parameterized by the statistical Fisher information. Instead of only utilizing the information bottleneck principle in the latent feature, our method further disentangles the latent features with additional mutual information and paired purification to guarantee that the class-relevant features contain and only contain class related information.

\noindent\textbf{Learnable mask.} 
\cite{mallya2018piggyback} involves learning masks for multi-task learning, the corresponding task-specific network is obtained by applying the learned masks to the backbone network. 
In~\cite{chattopadhyay2020learning}, the authors introduced the domain-specific masks to achieve the balance between specificity and invariance for domain generalization. \cite{lv2022causality} built a neural-network-based adversarial mask module to remove the inferior dimensions with less causal information. The learnable binary mask in our framework is to disentangle the class-relevant and class-irrelevant features, and it is deterministic with the sigmoid operation on the random variable instead of sampled from the Bernoulli distribution.

\section{Preliminaries}\label{sec:pre}
\subsection{Problem setting and definitions}
Let $\mathcal{X}\subset \mathbb{R}^d$ be the input space and $\mathcal{Y}\subset \mathbb{R}$ the target class label space. A domain is composed of data sampled from a joint distribution $P_{XY}$ on $\mathcal{X} \times \mathcal{Y}$. In the context of domain generalization, we are given $N$ source domains $\mathcal{S}_{\rm source}=\{\mathcal{S}^i=\{(\mathbf{x}^i,y^i)\}_{i=1}^N$, and each domain $\mathcal{S}^i$ associated with a joint distribution $P_{XY}^i$, where $(\mathbf{x}^i_j,y^i_j)\sim P_{XY}^i$. Note that the joint distribution between each pair of domains is different. A typical domain generalization framework is to learn a generalizable predictive function $C:\mathcal{X}\rightarrow\mathcal{Y}$ from the $N$ source domains and achieves a minimum prediction error on an unseen test domain $\mathcal{S}_{\rm target}$. 

We consider a learning model composed of a feature extractor $E: \mathcal{X}\rightarrow\mathcal{Z}$, where $\mathcal{Z}$ is a feature embedding space and a classifier $F: \mathcal{Z}\rightarrow\mathcal{Y}$. We divide the latent feature space $\mathcal{Z}$ into four different parts based on their association with the domains and label, the Venn diagram is illustrated in Figure \ref{figure_venn} (a). For simplicity, we consider the case with two source domains $S^1$ and $S^2$, therefore $Z_1$ and $Z_2$ are the corresponding latent features. Let $I(\cdot; \cdot)$ indicate the mutual information of two variables and $H(\cdot|\cdot)$ indicate the conditional entropy. We present the definitions of these four parts as follows:

\begin{figure}[!t]
	\centering
\begin{tabular}{c@{\hspace{0.1in}}c}
		\includegraphics[height=3.8cm]{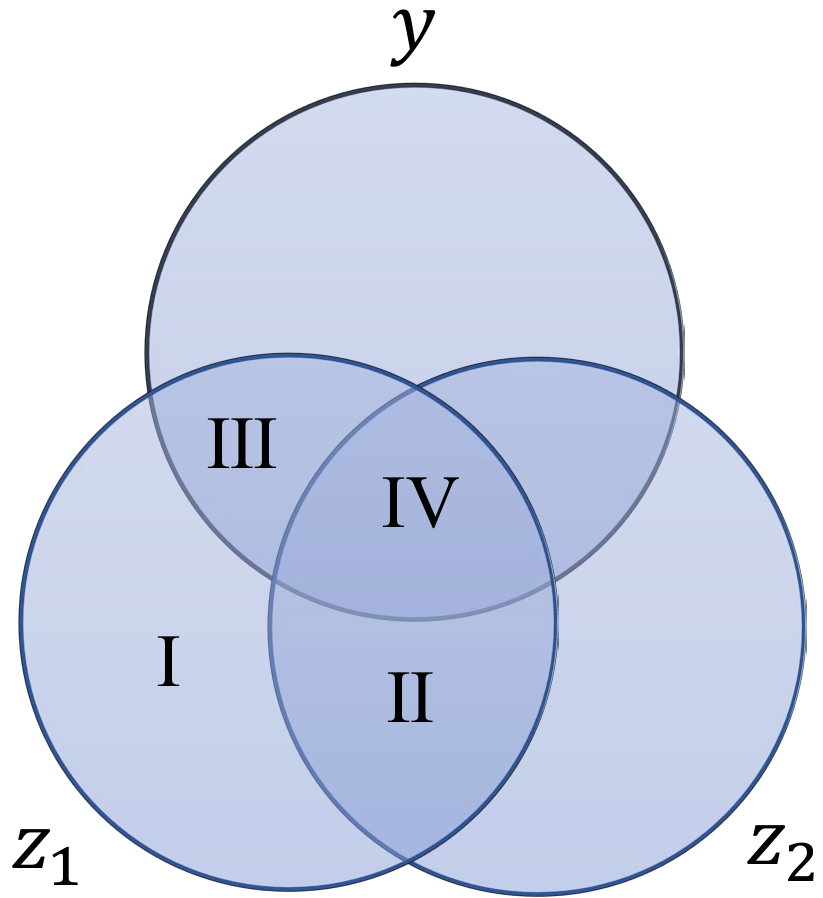}  & \includegraphics[height=3.6cm]{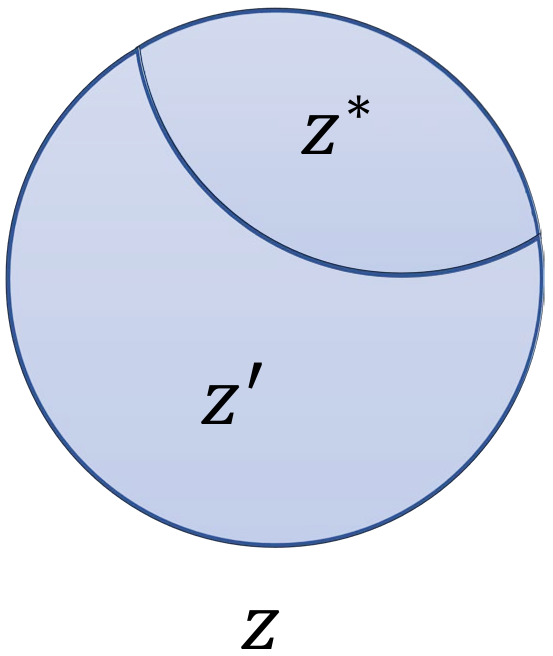} \\
		{(a) } & {(b) } 
	\end{tabular}
	\caption{\small \small (a) Venn diagram showing the relationships between two different source domains representation by $\mathbf{z}_1$, $\mathbf{z}_2$ and class label $\mathbf{y}$. Region I: \textbf{Domain-specific class-irrelevant}. Region II: \textbf{Domain-invariant class-irrelevant.} Region III: \textbf{Domain-specific class-relevant.} Region IV: \textbf{Domain-invariant class-relevant.} (b) Disentangling latent representation $\mathbf{z}$ as class-relevant $\mathbf{z^*}$ and class-irrelevant $\mathbf{z'}$.}
	\label{figure_venn}
\end{figure}

   

\begin{definition} \textit{It is said to be \textbf{\emph{Domain-Specific Class-Relevant}} for a feature extraction mapping $E: \mathcal{X}\rightarrow\mathcal{Z}$ if $\forall i.j=1,...,N,\ i\neq j$ such that $P^i(E(X))\neq P^j(E(X))$ \textbf{and} $H(Z_1|Y,Z_2)=0 \ I(Y;Z_1|Z_2)>0$. Corresponding \textbf{region \emph{III}} in Figure \ref{figure_venn} (a).}
\end{definition}

\begin{definition} \textit{It is said to be \textbf{\emph{Domain-Invariant Class-Relevant}} for a feature extractor mapping $E: \mathcal{X}\rightarrow\mathcal{Z}$ if $\forall i.j=1,...,N,\ i\neq j$ such that $P^i(E(X))\equiv P^j(E(X))$ \textbf{and} $I(Y;Z_1,Z_2)>0\ I(Z_1;Z_2|Y)=0$. Corresponding \textbf{region \emph{IV}} in Figure \ref{figure_venn} (a).}
\end{definition}

\begin{definition} \textit{It is said to be \textbf{\emph{Domain-Specific Class-Irrelevant}} for a feature extractor mapping $E: \mathcal{X}\rightarrow\mathcal{Z}$ if $\forall i.j=1,...,N,\ i\neq j$ such that $P^i(E(X))\neq P^j(E(X))$ \textbf{and} $H(Z_1|Y,Z_2)>0 \ I(Y;Z_1|Z_2)=0$. Corresponding \textbf{region \emph{I}} in Figure \ref{figure_venn} (a).}
\end{definition}

\begin{definition} \textit{It is said to be \textbf{\emph{Domain-Invariant Class-Irrelevant}} for a feature extractor mapping $E: \mathcal{X}\rightarrow\mathcal{Z}$ if $\forall i.j=1,...,N,\ i\neq j$ such that $P^i(E(X))\equiv P^j(E(X))$ \textbf{and} $I(Z_1;Z_2|Y)>0 \ I(Y;Z_1,Z_2)=0 $. Corresponding \textbf{region \emph{II}} in Figure \ref{figure_venn} (a).}
\end{definition}

Previous work typically first disentangles the latent feature into domain-specific (\textbf{\textit{region }}$\mathbf{I}$+$\mathbf{III}$) and domain-invariant (\textbf{\textit{region }}$\mathbf{II}$+$\mathbf{IV}$), and then further learn the domain-invariant class-relevant feature (\textbf{\textit{region }}$\mathbf{IV}$) by involving the class label information. However, it is argued that there still exists class-relevant information from the domain-specific part (\textbf{\textit{region }}$\mathbf{III}$), which could improve the generalizability for the unseen target domain. To capture the whole class-relevant information, we aim to disentangle the latent feature $\mathbf{z}$ in class-relevant $\mathbf{z}^*$ (\textbf{\textit{region }}$\mathbf{III}$+$\mathbf{IV}$) and class-irrelevant $\mathbf{z}'$ (\textbf{\textit{region }}$\mathbf{I}$+$\mathbf{II}$) as shown in  Figure \ref{figure_venn} (b). Therefore, the raising question is how to disentangle these two parts effectively. To answer this question, we introduce our proposed framework in the next section.

\section{Proposed Method}\label{sec:propose}

\begin{figure*}[!t]
	\centering
\begin{tabular}{c@{\hspace{0.25in}}c}
		\includegraphics[height=4.4cm]{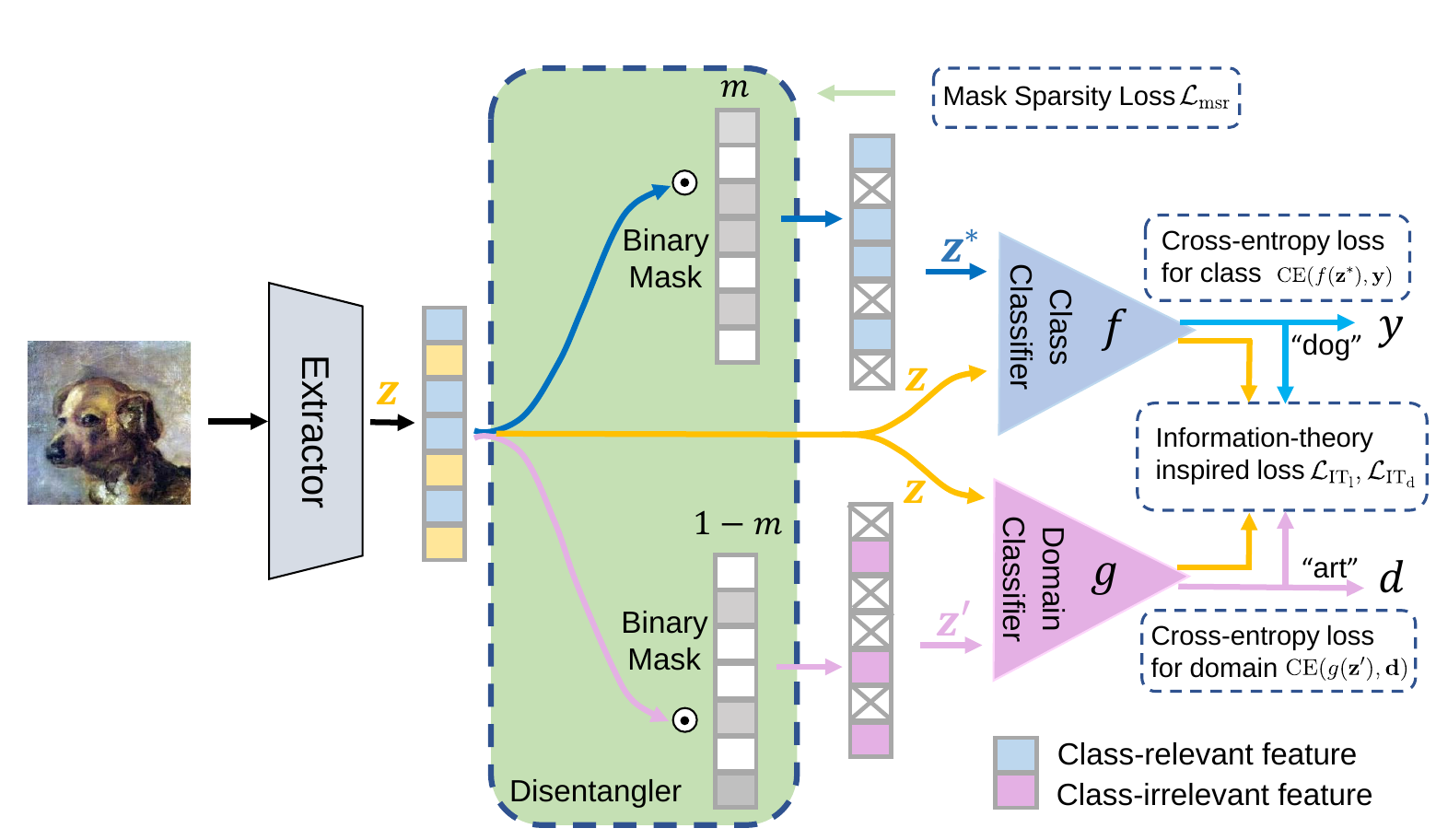} \hfill & \includegraphics[height=4.0cm]{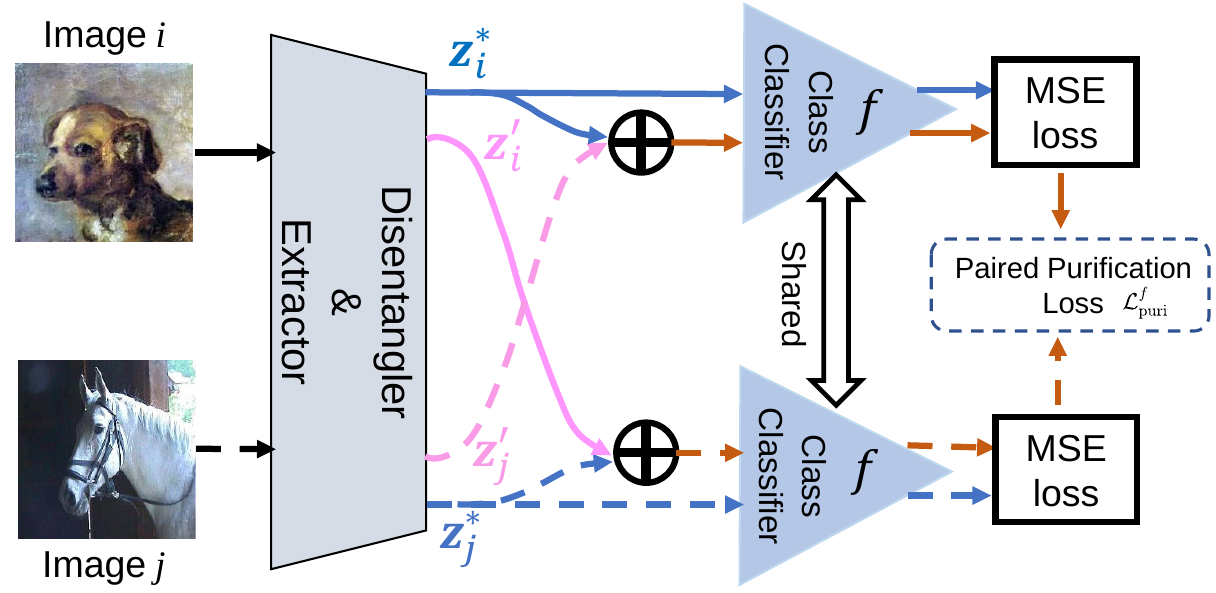} \\
		{(a) Information theory based feature disentangling. } & {(b) Paired purification. } \\
	\end{tabular}
	\caption{\small The framework of the proposed INSURE, our model consists of a feature extractor, a learnable binary mask, and two classifiers (\ie, class classifier and domain classifier). It contains two key components, namely (a) information theory-based feature disentangling and (b) paired purification. The binary mask disentangler is jointly trained with other components. The information theory-based feature disentangle ensures the $\mathbf{z}^*$ containing sufficient information related to class and paired purification further removes the redundant information within $\mathbf{z}^*$. }
	\label{fig:framework}
\end{figure*}

In this section, we will describe in detail how to learn the sufficient and necessary (compact) class-relevant feature $\mathbf{z}^*$, with the help of an auxiliary feature $\mathbf{z}'$ which will be eventually class-irrelevant. The entire framework is illustrated in Figure~\ref{fig:framework}. First, as shown in Figure~\ref{fig:framework} (a), we disentangle the original latent feature to obtain the complementary features $\mathbf{z}^*$ and $\mathbf{z}'$ by utilizing a binary mask disentangler. $\mathbf{z}^*$ is guaranteed to contain sufficient label information by using information theory. To discard the superfluous domain-specific information from $\mathbf{z}^*$, $\mathbf{z}'$ is learned to contain sufficient domain information. Then, as shown in Figure~\ref{fig:framework} (b), a paired purification loss function is proposed to eliminate all the label relevant information in $\mathbf{z}'$, and therefore ensure that $\mathbf{z}^*$ contains sufficient and necessary (compact) class-relevant information.




\subsection{Disentangling the class-relevant and class-irrelevant with a binary mask}

As shown in Figure~\ref{fig:framework}, an input image $\mathbf{x}$ is first fed into the feature extractor $E$ to get the intermediate features, referred to as $\mathbf{z}$, \ie, $\mathbf{z} = E(\mathbf{x}), \mathbf{z}\in \mathbb{R}^k$, where $k$ is the feature dimension. The intermediate feature $\mathbf{z}$ entangles class-relevant/irrelevant and domain-invariant/specific information. Our goal is to extract $\mathbf{z}^*$ that contains sufficient and necessary label information out of $\mathbf{z}$. We cast the problem as disentangling $\mathbf{z}$ to obtain $\mathbf{z}^*$ as class-relevant and $\mathbf{z}'$ as class-irrelevant to fully utilize the class label and domain index information from the multiple source domains, as well as training a class classifier $f$ and a domain classifier $g$ simultaneously.

We treat the disengagement of $\mathbf{z}$ as a feature selection problem, \ie, in $\mathbf{z}$, there are some feature dimensions that are class-relevant while the remains are class-irrelevant. Therefore, we propose to apply a binary mask as our disentangler. More specifically, given the intermediate feature $\mathbf{z}=[z_1,...,z_k]^\top$, we introduce mask parameters $\mathbf{m}=[m_1,...,m_k]^\top\in \{0,1\}^k$. The class-relevant feature $\mathbf{z}^*$ and $\mathbf{z}'$ are defined as follows:
\begin{equation}
\small
\left\{\begin{aligned}
\mathbf{z}^* &=\mathbf{z}\odot\mathbf{m}\in \mathbb{R}^k, \ m_i=\mathbb{1}(\sigma(\tilde{m}_i)<0.5)\\
\mathbf{z}' &=\mathbf{z}\odot(\mathbf{1}-\mathbf{m})\in \mathbb{R}^k,
\end{aligned}\right.
\end{equation}
where $\odot$ is the element-wise multiplication, $\sigma(\cdot)$ is the sigmoid operation and $\tilde{m}_i$ is a learnable variable.


Disentangling the latent intermediate feature with a binary mask has its advantages compared to encoder-based methods in the following aspects: (1) a binary mask requires only one learnable vector instead of multiple (usually fully connected neural network based) encoders used in encoder-based disentanglement. (2) latent features divided by the binary mask are orthogonal with each other without involving additional constraints. (3) The binary mask-based disentangler naturally maintains all the information through the disentanglement, as the summation of $\mathbf{z}^*$ and $\mathbf{z}'$ equals to $\mathbf{z}$. By contrast, the encoder-based disentanglement needs an additional decoder to reconstruct the original feature $\mathbf{z}$ to avoid information loss during disentanglement. Mask parameters are jointly trained with the feature extractor $E$ and classifiers $f$ and $g$. However, one issue with the binary mask is that we cannot update the mask parameters directly using back-propagation. We thus employ straight-through estimator~\cite{bengio2013estimating} to approximate the gradient through the binary mask.

Motivated by the information bottleneck (IB) principle~\cite{tishby2000information, AlemiFD017}, we also want to compress the latent representation $\mathbf{z}$ before the disentangler, which may improve the generalization ability by discarding irrelevant distractors in the original input $\mathbf{x}$. Thus we involve the IB principle on $\mathbf{z}$: 
\begin{equation}\label{eq:IB_loss}
\small
\mathcal{L}_{\textmd{IB}}=-I(\mathbf{z};\mathbf{y})+\epsilon I(\mathbf{z};\mathbf{x}).
\end{equation}
It encourages $\mathbf{z}$ to maximize the predictive power while compressing the information from the original image $\mathbf{x}$, where $\epsilon\geq 0$ controls the compression. Based on~\cite{AlemiFD017}, $-I(\mathbf{z};\mathbf{y})$ can be approximated as the classic cross-entropy loss, which we utilize the cross-entropy loss of the class label classification for $\mathbf{z}^*$ and the cross-entropy loss of the domain index classification for $\mathbf{z}'$.  $I(\mathbf{z};\mathbf{x})$ can be minimized by its variational upper bound defined by the KL-divergence between $q(\mathbf{z}|\mathbf{x})\sim \mathcal{N}(\bm{\mu}_\theta(\mathbf{x}),\texttt{diag}(\bm{\sigma}^2_\theta(\mathbf{x}))$) and a Gaussian normal distribution $r(\mathbf{z})\sim \mathcal{N}(0,1)$. Thus, the loss function for the disentanglement can be written as:
\begin{equation}\label{disentangle_loss}
\footnotesize
\mathcal{L}_{\rm dis}=\mathcal{L}_{\textmd{CE}}(f(\mathbf{z}^*);\mathbf{y})+\mathcal{L}_{\textmd{CE}}(g(\mathbf{z}');\mathbf{d})\\+\epsilon D_{\textmd{KL}}[q(\mathbf{z}|\mathbf{x}),r(\mathbf{z})],
\end{equation}
where $\mathbf{y}$ indicates the class label, and $\mathbf{d}$ is the domain index.

\subsection{Sufficiency of \boldmath${z}^*$ and $z'$ }

Our ideal goal is to learn $\mathbf{z}^*$ that contains sufficient and necessary label information that $\mathbf{z}$ has. As the first step, we ensure the $\mathbf{z}^*$ to keep all predictive information \emph{w.r.t.} label $\mathbf{y}$, \ie, $I(\mathbf{z};\mathbf{y})=I(\mathbf{z}^*;\mathbf{y})$. However, mutual information estimation is known as a challenging problem~\cite{tian2021farewell}. In this paper, following~\cite{tian2021farewell}, we introduce a practical calculation of the above mentioned ``sufficient". 

\begin{theorem}\label{theoerm1}
Assume the latent feature $\mathbf{z}$ is sufficient to predict the label. If the KL-divergence between the prediction distribution of the $\mathbf{z}$ and $\mathbf{z}^*$ equals to 0, then $I(\mathbf{z};\mathbf{y})=I(\mathbf{z}^*;\mathbf{y})$, \ie, $\mathbf{z}^*$ is also sufficient for the label.
\end{theorem}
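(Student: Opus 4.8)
The plan is to unpack the two hypotheses in terms of conditional entropies and then chain two equalities. First I would write the ``sufficiency of $\mathbf{z}$'' assumption in its standard form: $\mathbf{z}$ being sufficient for $\mathbf{y}$ means $I(\mathbf{x};\mathbf{y}\mid \mathbf{z})=0$, equivalently $I(\mathbf{z};\mathbf{y})=I(\mathbf{x};\mathbf{y})$, i.e. $\mathbf{z}$ retains all the label information present in the input. Next I would make precise what ``the KL-divergence between the prediction distributions of $\mathbf{z}$ and $\mathbf{z}^*$ equals $0$'' means: the classifier-induced predictive distributions satisfy $D_{\textmd{KL}}\bigl[p(\mathbf{y}\mid \mathbf{z})\,\|\,p(\mathbf{y}\mid \mathbf{z}^*)\bigr]=0$, so by the fact that KL-divergence vanishes iff the two distributions coincide (almost everywhere), we get $p(\mathbf{y}\mid \mathbf{z})=p(\mathbf{y}\mid \mathbf{z}^*)$.

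The core step is then a short computation with mutual information written via entropy. Since $\mathbf{z}^* = \mathbf{z}\odot\mathbf{m}$ is a deterministic function of $\mathbf{z}$ (given the mask), the data-processing inequality gives $I(\mathbf{z}^*;\mathbf{y})\le I(\mathbf{z};\mathbf{y})$ for free; the work is the reverse inequality. I would expand $I(\mathbf{z};\mathbf{y})=H(\mathbf{y})-H(\mathbf{y}\mid \mathbf{z})$ and $I(\mathbf{z}^*;\mathbf{y})=H(\mathbf{y})-H(\mathbf{y}\mid \mathbf{z}^*)$, so that the claim reduces to $H(\mathbf{y}\mid \mathbf{z})=H(\mathbf{y}\mid \mathbf{z}^*)$. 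But $H(\mathbf{y}\mid \mathbf{z}) = \mathbb{E}_{\mathbf{z}}\bigl[H\bigl(p(\mathbf{y}\mid \mathbf{z})\bigr)\bigr]$ and similarly for $\mathbf{z}^*$, and since the equality $p(\mathbf{y}\mid \mathbf{z})=p(\mathbf{y}\mid \mathbf{z}^*)$ holds pointwise (with $\mathbf{z}^*$ determined by $\mathbf{z}$), the two conditional entropies coincide. Combining, $I(\mathbf{z};\mathbf{y})=I(\mathbf{z}^*;\mathbf{y})$; together with the assumed sufficiency of $\mathbf{z}$ this also yields $I(\mathbf{z}^*;\mathbf{y})=I(\mathbf{x};\mathbf{y})$, so $\mathbf{z}^*$ is sufficient as well.

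The main obstacle is conceptual rather than computational: one has to be careful about what probability distributions are meant by ``prediction distribution.'' If $p(\mathbf{y}\mid\mathbf{z})$ denotes the true posterior, the argument above is clean; if it denotes the classifier output $f(\mathbf{z}^*)$ versus a classifier on $\mathbf{z}$, one needs the (mild, and implicit in this literature) assumption that these classifiers realize the true posteriors, or one works throughout with the classifier-induced predictive variables and defines sufficiency relative to them. I would state this modeling assumption explicitly at the start of the proof, then the rest is the two-line entropy manipulation. A secondary subtlety is the measure-theoretic ``almost everywhere'' qualifier when passing from $D_{\textmd{KL}}=0$ to equality of distributions and then to equality of expected entropies, but since we integrate over the same distribution of $\mathbf{z}$ this causes no difficulty.
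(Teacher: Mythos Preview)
Your proposal is correct and follows essentially the same route as the paper: both reduce the claim to $H(\mathbf{y}\mid\mathbf{z})=H(\mathbf{y}\mid\mathbf{z}^*)$ via the decomposition $I(\cdot;\mathbf{y})=H(\mathbf{y})-H(\mathbf{y}\mid\cdot)$, with the KL hypothesis supplying the equality of conditional entropies. Your write-up is more careful than the paper's terse version---you actually justify the step from $D_{\textmd{KL}}=0$ to equal posteriors to equal conditional entropies, and you flag the interpretational issue about what ``prediction distribution'' means---but the underlying argument is the same.
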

\begin{proof}
$D_{\textmd{KL}} [\mathbf{P}_\mathbf{z}\|\mathbf{P}_\mathbf{z^*}]=0\Longrightarrow H(\mathbf{y}|\mathbf{z})-H(\mathbf{y}|\mathbf{z}^*)=0$. Note that $I(\mathbf{z};\mathbf{y})=H(\mathbf{y})-H(\mathbf{y}|\mathbf{z})$, $I(\mathbf{z}^*;\mathbf{y})=H(\mathbf{y})-H(\mathbf{y}|\mathbf{z}^*)$ and $I(\mathbf{z}^*;\mathbf{y})-I(\mathbf{z};\mathbf{y})=H(\mathbf{y}|\mathbf{z})-H(\mathbf{y}|\mathbf{z}^*)$. Thus minimizing $I(\mathbf{z};\mathbf{y})-I(\mathbf{z}^*;\mathbf{y})$ is equivalent to minimizing $D_{\textmd{KL}} [\mathbf{P}_\mathbf{z}\|\mathbf{P}_\mathbf{z^*}$].  
\end{proof}

With Theorem~\ref{theoerm1}, we define an information theory-based loss for class-relevant feature $\mathbf{z}^*$: 
\begin{equation}\label{suffiecient_loss_f}
\small
\mathcal{L}_{\rm IT_l}=D_{\textmd{KL}} [f(\mathbf{z})\|f(\mathbf{z}^*)],
\end{equation}
where $f$ is the class label classifier.

The mutual information between $\mathbf{z}$ and $\mathbf{z}^*$ can be factorized to two terms~\cite{federici2020learning, tian2021farewell}:
\begin{equation}\label{eq:ylabel1}
\small
I(\mathbf{z};\mathbf{z}^*)=I(\mathbf{z}^*;\mathbf{y})+I(\mathbf{z};\mathbf{z}^*|\mathbf{y}),
\end{equation}
where $I(\mathbf{z};\mathbf{z}^*|\mathbf{y})$ represents
the class-irrelevant (\ie, superfluous) information in $\mathbf{z}$. If the above mentioned sufficient condition, \ie, $I(\mathbf{z};\mathbf{y})=I(\mathbf{z}^*;\mathbf{y})$, can be achieved, then the Eq.~\ref{eq:ylabel1} becomes:
\begin{equation}\label{eq:ylabel2}
\small
I(\mathbf{z};\mathbf{z}^*)=I(\mathbf{z};\mathbf{y})+I(\mathbf{z};\mathbf{z}^*|\mathbf{y})\geq{I(\mathbf{z};\mathbf{y})},
\end{equation}
where $I(\mathbf{z};\mathbf{y})$ indicates the \textbf{\textit{region }}$\mathbf{III}$+$\mathbf{IV}$. That is to say, while $\mathbf{z}^*$ contains sufficient label information, it may also contain superfluous information from \textbf{\textit{region }}$\mathbf{I}$ and/or \textbf{\textit{region }}$\mathbf{II}$. To obtain the compact (necessary) label information, we strive for discarding the superfluous information from \textbf{\textit{region }}$\mathbf{I}$ and \textbf{\textit{region }}$\mathbf{II}$. As \textbf{\textit{region }}$\mathbf{I}$ contains the domain-specific feature, we consider letting $\mathbf{z}'$ catch up all the domain-specific information and thus enforce $\mathbf{z}^*$ to drop off the information from \textbf{\textit{region }}$\mathbf{I}$.

Similar to the effort on making $\mathbf{z}^*$ to obtain sufficient label information, we ensure the $\mathbf{z}'$ to keep all domain information \emph{w.r.t} domain index by enforcing $I(\mathbf{z};\mathbf{d})=I(\mathbf{z}';\mathbf{d})$, where $\mathbf{d}$ indicates the domain index. Following Theorem~\ref{theoerm1}, we define an information theory based loss for domain relevant feature $\mathbf{z}'$:
\begin{equation}\label{suffiecient_loss_g}
\small
\mathcal{L}_{\rm IT_d}=D_{\textmd{KL}} [g(\mathbf{z})\|g(\mathbf{z}')],
\end{equation}
where $g$ is the domain classifier. By satisfying $I(\mathbf{z};\mathbf{d})=I(\mathbf{z}';\mathbf{d})$, we obtain:
\begin{equation}\label{eq:ylabel2}
\small
I(\mathbf{z};\mathbf{z}')=I(\mathbf{z};\mathbf{d})+I(\mathbf{z};\mathbf{z}'|\mathbf{d})\geq{I(\mathbf{z};\mathbf{d})},
\end{equation}
where $I(\mathbf{z};\mathbf{d})$ indicates the \textbf{\textit{region }}$\mathbf{I}$+$\mathbf{III}$.

\subsection{Purification of \boldmath${z}^*$ }

As discussed above, $\mathbf{z}^*$ and $\mathbf{z}'$ features contain sufficient information related to the class label and domain index, respectively. The KL-divergence loss in Eq.~\ref{suffiecient_loss_f} enforces $\mathbf{z}^*$ containing \textbf{\textit{region }}$\mathbf{IV}$, and thus the $\mathbf{z}'$ does not contain any information from \textbf{\textit{region }}$\mathbf{IV}$. However, there are still gaps to achieve our ultimate goal, \ie, ensure the feature $\mathbf{z}^*$ containing sufficient and necessary label information, which is equivalent to letting the feature $\mathbf{z}^*$ contain and only contain \textbf{\textit{region }}$\mathbf{III}$+$\mathbf{IV}$. First, there is no constraint for assigning \textbf{\textit{region }}$\mathbf{II}$, and thus where it is allocated is unclear. Second, from the KL-divergence losses in Eq.~\ref{suffiecient_loss_f} and Eq.~\ref{suffiecient_loss_g}, both $\mathbf{z}^*$ and $\mathbf{z}'$ compete for \textbf{\textit{region }}$\mathbf{III}$, and thus there no guarantee that $\mathbf{z}^*$ contains whole \textbf{\textit{region }}$\mathbf{III}$. 

To fill the first gap, \ie, getting rid of \textbf{\textit{region }}$\mathbf{II}$ from $\mathbf{z}^*$, we propose to apply the mask sparsity regularization (MSR), which can be defined by the $l_1$ norm of the mask parameter vector as:
\begin{equation}\label{mask_loss}
\small
\mathcal{L}_{\rm msr}=\sum_{i=1}^{k}(1-\sigma(\tilde{m}_i)),
\end{equation}
where $k$ is the dimension of $\mathbf{z}$, and $\sigma(\cdot)$ refers the sigmoid operation. This loss encourages $\mathbf{z}^*$ to contain as less information as possible by \textit{turning on} a small number of elements within latent feature $\mathbf{z}$. The motivation is that keeping or removing a feature that is relevant to \textbf{\textit{region }}$\mathbf{II}$, referring as the domain-invariant class-irrelevant feature, will not impact any of the $L_{\rm dis}$, $L_{\rm IT_l}$, $L_{\rm IT_d}$ and $L_{\rm puri}$ loss functions, while removing this feature will decrease the msr loss and thus decrease the total loss, compared to keeping it. 

To fill the second gap, \ie, let \textbf{\textit{region }}$\mathbf{III}$ allocating to $\mathbf{z}^*$, we propose a purification strategy to prevent $\mathbf{z}'$ from containing any class-relevant information.

\begin{assumption}  
\label{Proposition 1} Denote $\mathbf{z}^*_i$ as the class-relevant features of input $\mathbf{x}_i$ and $\mathbf{z}'_j$ as the class-irrelevant features of any other input $\mathbf{x}_j$. We assume that class distribution is invariant from the variation of the class-irrelevant shift. Thus the following invariance condition should hold:
\begin{equation}\label{pair_condition}
\small
P(Y=y|Z=\mathbf{z}_i^*)=P(Y=y|Z= \mathbf{z}_i^*+\mathbf{z}'_j).
\end{equation}
\end{assumption}

This assumption shows that given a feature that is combined by the $\mathbf{z}^*$ of one sample (say $i$) and the $\mathbf{z}'$ of another sample (say $j$), its class label prediction depends only on $\mathbf{z}^*$ part, regardless of the variation of $\mathbf{z}'$. This is to say, $\mathbf{z}'$ feature does not contain any label information (from \textbf{\textit{region }}$\mathbf{III}$), and if it does,  Eq.~\ref{pair_condition} will not be satisfied. 

Based on this assumption, we propose to use the distance between $P(Y=y|Z=\mathbf{z}_i^*)$ and $P(Y=y|Z= \mathbf{z}_i^*+\mathbf{z}'_j$) as the paired purification loss function, defined as
\begin{equation}\label{eq:pair_loss_f}
\footnotesize
\mathcal{L}^{f}_{\rm puri} = \frac{1}{N}\sum_{i\neq j}^{N} \|f(\mathbf{z}_i^*),f(\mathbf{z}_i^*+\mathbf{z}'_j)\|+\|f(\mathbf{z}_j^*),f(\mathbf{z}_j^*+\mathbf{z}'_i)\|,
\end{equation}
where $\|\cdot, \cdot\|$ indicates the mean squared error (MSE) loss, $f$ refers to the class classifier, and $N$ is the number of sample pairs.

In summary, Eq.~\ref{mask_loss} helps $\mathbf{z}^*$ to discard \textbf{\textit{region }}$\mathbf{II}$ as much as possible, while Eq.~\ref{eq:pair_loss_f} requires  $\mathbf{z}'$ to get rid of \textbf{\textit{region }}$\mathbf{III}$ and thus \textbf{\textit{region }}$\mathbf{III}$ will be allocated to $\mathbf{z}^*$ thanks to the property of the binary mask.

\subsection{Training and inference}

In the training stage, we jointly train the feature extractor, the learnable binary mask and two classifiers. Our final loss function is:
\begin{equation}\label{eq:total_loss}
\small
\mathcal{L}=\mathcal{L}_{\rm dis} + \alpha\big(\mathcal{L}_{\rm IT_l} + \mathcal{L}_{\rm IT_d}\big) + \beta\mathcal{L}^{f}_{\rm puri} + \gamma\mathcal{L}_{\rm msr},
\end{equation}
where $\mathcal{L}_{\rm IT_l}$ and $\mathcal{L}_{\rm IT_d}$ encourage $\mathbf{z}^*$ and $\mathbf{z}'$ to contain sufficient information, $\mathcal{L}^f_{\rm puri}$ and $\mathcal{L}_{\rm msr}$ work together to further purify $\mathbf{z}^*$. $\alpha$, $\beta$, and $\gamma$ are selected as the balance parameters to adjust the importance of each component. In the inference stage, we only utilize $\mathbf{z}^*$ and obtain the final prediction based on the class classifier $f$.

\section{Experiments}
\label{sec:exper}
This section illustrates the superiority of our method with respect to four widely used DG benchmark datasets. Additionally, we carry out detailed ablation studies to determine the impacts of different components.
\subsection{Experimental settings}

\noindent\textbf{Dataset.} The performance of our model is evaluated on four popular datasets, including PACS, OfficeHome, TerraIncognita and DomainNet. PACS~\cite{li2017deeper} contains overall 9991 images of 7 categories from 4 domains: photo (P), art-painting (A), cartoon (C) and sketch (S). OfficeHome~\cite{venkateswara2017deep} contains 15,579 images in total with 65 categories from 4 domains of styles: Artistic (A), Clip-Art (C),
Product (P) and Real-World (R).
TerraIncognita~\cite{beery2018recognition} contains 24788 images with 10 categories from 
4 domains. DomainNet~\cite{peng2019moment} is a more recent and the largest dataset used in domain generalization tasks. It contains 0.6 million images in total with 345 categories from 6 domains: clipart, infograph, painting, quickdraw, real and sketch.


\noindent\textbf{Implementation details.}
In all of our experiments, we use the open-source code Domainbed~\cite{gulrajani2020search}. We do not apply any model selection or early stop strategy for simplicity, but just use the last model that is fully trained after all steps for the inference. We utilize ResNet-50~\cite{he2016deep} pre-trained on Imagenet as our initialization for training in all the experiments and our model is optimized with Adam optimizer~\cite{kingma2014adam}. To make the prediction more stable, followed by~\cite{arpit2021ensemble}, our model is updated with a simple moving average (SMA), starting at 100 iterations until the end of training. Both the classifiers $f$ and $g$ are one-layer MLPs. For the learnable binary mask, we initialize it as 1, which means all the neurons are turned on at the beginning. Following the literature, we train our model with 5000 iterations on PACS, OfficeHome and TerraIncognita datasets, and 20000 iterations on the DomainNet dataset, while the batch-size is set to 32 for all four datasets. We simply set the weights of each loss term in Eq.~\ref{eq:total_loss} as $\alpha=9$, $\beta=1$, and $\gamma=1$. During the training, we fix $\gamma$ while $\alpha$ and $\beta$ are slowly increasing to their final value with an exponential schedule, since starting with a larger value results in the encoder collapsing into a fixed value. 
We use Adam~\cite{kingma2014adam} optimizer for training and set the learning rate to $3.5e-4$ for the mask parameters and $5e-5$ for the remaining architectures. The weight of the information bottleneck is set to $1e-7$ for PACS, and $1e-5$ for OfficeHome, TerraIncognita, and DomainNet datasets. For the single-source domain generalization task, we remove the information bottleneck term because it harms the performance and set the learning rate to $5e-3$ for the mask parameters and $5e-5$ for the remaining architectures. The weight parameters are selected as $\alpha=10, \beta=1,\gamma=1$. All the experiments are conducted with two NVIDIA V100 GPUs, Python 3.8.13, PyTorch 1.8.0, Torchvision 0.9.0, and CUDA 11.1.

\subsection{Main results}
We evaluate INSURE model and compare it to the state of the art approaches on four standard benchmark datasets, following the settings of~\cite{gulrajani2020search,bui2021exploiting}. We illustrate the results in Table~\ref{tab:main_results}. It shows that, compared to the strong baseline (\ie, ERM); meta-learning (\ie, MLDG~\cite{li2018learning}); data augmentation (\ie, Mixup~\cite{yan2020improve, xu2020adversarial, wang2020heterogeneous}, SagNet~\cite{nam2021reducing}, RSC~\cite{huang2020self}, and FACT~\cite{xu2021fourier}), we consistently achieve the best performance. Our method also outperforms the traditional domain-specific learning (\ie, GDRO~\cite{sagawa2019distributionally}, MTL\cite{blanchard2021domain}, ARM~\cite{zhang2020adaptive}) and domain-invariant learning methods (\ie, IRM~\cite{arjovsky2019invariant}, CORAL~\cite{sun2016deep}, MMD~\cite{li2018domain}, DANN~\cite{ganin2016domain}, CDANN~\cite{li2018deep}, VREx~\cite{krueger2021out}), suggesting that focusing only on domain-invariant or domain-specific is insufficient for domain generalization. INSURE performs better than IIB~\cite{li2022invariant}, which achieves invariant causal prediction based on the information theory, because we further disentangle the latent features and discard more redundant information. In addition, our performance outperforms mDSDI~\cite{bui2021exploiting}, which disentangles latent features in domain-specific and domain-invariant parts and then inferences based on their concatenation. This demonstrates that our disentanglement is more effective. 
Note that we did not reproduce the comparison methods and all the accuracy numbers are from previous papers. From all the above comparisons, we can demonstrate the efficacy of our approach and further reveal that our class-relevant features provide more useful information and are beneficial for domain generalization.

\begin{table}[ht]
\caption{\label{tab:main_results} Comparison with state-of-art methods on PACS, OfficeHome, TerraIncognita (TI), and DomainNet with ResNet-50 ImageNet pre-trained model. The best accuracy is highlighted.}
\centering
\begin{adjustbox}{width=0.48\textwidth}
\begin{tabular}{l|cccc|c}
\hline \hline
\textbf{Model}   & \textbf{PACS}  & \textbf{OfficeHome}   & \textbf{TI}  & \textbf{DomainNet}  & \textbf{Avg}\\
\hline 
ERM~\cite{vapnik1999}                     & 85.5     & 66.5      & 46.1    & 41.3    &59.9 \\
IRM~\cite{arjovsky2019invariant}          & 83.5     & 64.3      & 47.6    & 28.0    &55.9  \\
GDRO~\cite{sagawa2019distributionally}    & 84.4     & 66.0      & 43.2    & 33.4    & 56.8\\
Mixup~\cite{yan2020improve}               & 84.6     & 68.1     & 47.9     & 39.6       &60.1 \\
MLDG~\cite{li2018learning}                & 84.9     & 66.8       & 47.7      & 41.6   &60.3   \\
CORAL~\cite{sun2016deep}                  & 86.2     & 68.7     & 47.6      & 41.8   &61.1  \\
MMD~\cite{li2018domain}                   & 84.6     & 66.3       & 42.2     & 23.5   &54.2   \\
DANN~\cite{ganin2016domain}               & 83.6     & 65.9      & 46.7  & 38.3     & 58.6   \\
CDANN~\cite{li2018deep}                   & 82.6     & 65.8       & 45.8   & 38.5  &58.2  \\
MTL\cite{blanchard2021domain}             & 84.6     & 66.4       & 45.6      & 40.8      &59.4   \\
SagNet~\cite{nam2021reducing}             & 86.3     & 68.1        & 48.6      & 40.8   &61.0   \\
ARM~\cite{zhang2020adaptive}              & 85.1     & 64.8     & 45.5     &36.0   &57.9   \\   
VREx~\cite{krueger2021out}                & 84.9     & 66.4      & 46.4    & 30.1      &57.0   \\
RSC~\cite{huang2020self}                  & 85.2     & 65.5       & 46.6       & 38.9    &59.1     \\
DMG~\cite{ChattopadhyayBH20}                 & 83.4    & -     & -     &43.6  &-\\
mDSDI~\cite{bui2021exploiting}            & 86.2     & 69.2      & 48.1     & 42.8   &61.6    \\
FACT~\cite{xu2021fourier}                 & 88.2     & 66.6      & -     & -   &-\\
IIB~\cite{li2022invariant}                & 83.9     & 68.6      & 45.8 
   & 41.5   & 60.0\\
SWAD~\cite{cha2021swad}                   & 88.2     & 70.6  & 50.0  & 46.5  &63.8    \\
PCL~\cite{yao2022pcl}                     & 88.7     & 71.6  & 52.1  & 47.7 & 65.0  \\
\hline
INSURE                                    & \textbf{89.3}     & \textbf{72.0}  & \textbf{53.1}   & \textbf{48.0} & \textbf{65.6}\\
\hline \hline
\end{tabular}
\end{adjustbox}
\end{table}

\begin{table*}[ht]
\caption{\label{tab:full_results} The full results and comparison with state-of-art methods on PACS, OfficeHome, TerraIncognita, and DomainNet with ResNet-50 ImageNet pre-trained model. The best accuracy is highlighted.}
\centering
\begin{adjustbox}{width=\textwidth}
\begin{tabular}{l|ccccc|ccccc|ccccc|ccccccc}
\hline \hline
\multirow{2}{*}{Model}&\multicolumn{5}{c|}{PACS} &\multicolumn{5}{c|}{OfficeHome} & \multicolumn{5}{c|}{TerraIncognita} & \multicolumn{7}{c}{DomainNet}\\
\cline{2-23}
& A & C & P & S &Avg & A & C & P & R & Avg & L100 & L38 & L43 & L46 & Avg & clip & info & paint & quick & real & sketch & Avg \\
\hline 
ERM~\cite{vapnik1999}&84.7&80.8&97.2&79.3&85.5& 61.3     & 52.4      & 75.8      & 76.6     & 66.5    & 49.8     & 42.1     & 56.9     & 35.7    & 46.1 & 58.6     & 19.2     & 47.0     & 13.2     & 59.9     & 49.8     & 41.3 \\
IRM~\cite{arjovsky2019invariant}& 84.8       & 76.4       & 96.7      & 76.1      & 83.5 & 58.9      & 52.2       & 72.1       & 74.0      & 64.3   & 54.6     & 39.8     & 56.2     & 39.6    & 47.6    & 40.4     & 12.1     & 31.4     & 9.8      & 37.7     & 36.7     & 28.0 \\

GDRO~\cite{sagawa2019distributionally}   & 83.5        & 79.1       & 96.7        & 78.3        & 84.4  & 60.4        & 52.7    & 75.0      & 76.0      & 66.0   & 41.2 & 38.6     & 56.7     & 36.4    & 43.2 & 47.2 & 17.5   & 34.2     & 9.2      & 51.9     & 40.1     & 33.4\\
Mixup~\cite{yan2020improve}     & 86.1      & 78.9      & 97.6       & 75.8      & 84.6 & 62.4      & 54.8      & 76.9       & 78.3       & 68.1 &\textbf{59.6}  & 42.2  & 55.9   & 33.9    & 47.9   & 55.6     & 18.7     & 45.1     & 12.8     & 57.6     & 48.2     & 39.6 \\
MLDG~\cite{li2018learning}         & 85.5        & 80.1       & 97.4       & 76.6     & 84.9 & 61.5     & 53.2       & 75.0      & 77.5   & 66.8  & 54.2     & 44.3     & 55.6     & 36.9    & 47.7   & 59.3     & 19.6     & 46.8     & 13.4     & 60.1     & 50.4     & 41.6   \\
CORAL~\cite{sun2016deep}          & 88.3        & 80.0       & 97.5       & 78.8     & 86.2 & 65.3       & 54.4     & 76.5      & 78.4   & 68.7  & 51.6     & 42.2     & 57.0     & 39.8    & 47.6   & 59.2     & 19.9     & 47.4     & 14.0     & 59.8     & 50.4     & 41.8\\
MMD~\cite{li2018domain}           & 86.1        & 79.4       & 96.6       & 76.5      & 84.6& 60.4      & 53.3       & 74.3     & 77.4    & 66.3  & 41.9     & 34.8     & 57.0     & 35.2    & 42.2   & 32.2     & 11.2     & 26.8     & 8.8      & 32.7     & 29.0     & 23.5\\
DANN~\cite{ganin2016domain}     & 86.4        & 77.4       & 97.3         & 73.5        & 83.6& 59.9     & 53.0      & 73.6     & 76.9      & 65.9   & 51.1     & 40.6     & 57.4     & 37.7    & 46.7   & 53.1     & 18.3     & 44.2     & 11.9     & 55.5     & 46.8     & 38.3 \\
CDANN~\cite{li2018deep}         & 84.6        & 75.5       & 96.8         & 73.5        & 82.6& 61.5     & 50.4       & 74.4   & 76.6  & 65.8   & 47.0     & 41.3     & 54.9     & 39.8    & 45.8   & 54.6     & 17.3     & 44.2     & 12.8     & 56.2     & 45.9     & 38.5\\
MTL\cite{blanchard2021domain}    & 87.5        & 77.1       & 96.4       & 77.3       & 84.6 & 61.5      & 52.4       & 74.9      & 76.8      & 66.4    & 49.3     & 39.6     & 55.6     & 37.8    & 45.6    & 58.0     & 19.2     & 46.2     & 12.7     & 59.9     & 49.0     & 40.8\\
SagNet~\cite{nam2021reducing}    & 87.4       & 80.7       & 97.1       & 80.0        & 86.3 & 63.4       & 54.8        & 75.8      & 78.3   & 68.1  & 53.0     & 43.0     & 57.9     & 40.4    & 48.6   & 57.7     & 19.1     & 46.3     & 13.5     & 58.9     & 49.5     & 40.8\\
ARM~\cite{zhang2020adaptive}     & 86.8       & 76.8        & 97.4       & 79.3        & 85.1& 58.9        & 51.0      & 74.1      & 75.2   & 64.8   & 49.3     & 38.3     & 55.8     & 38.7   & 45.5   & 49.6     & 16.5     & 41.5     & 10.8     & 53.5     & 43.9     & 36.0\\   
VREx~\cite{krueger2021out}     & 86.0       & 79.1       & 96.9        & 77.7        & 84.9& 60.7       & 53.0      & 75.3      & 76.6      & 66.4   & 48.2     & 41.7     & 56.8     & 38.7   & 46.4   & 43.3     & 14.1     & 32.5     & 9.8      & 43.5     & 37.7     & 30.1 \\
RSC~\cite{huang2020self}       & 85.4       & 79.7       & 97.6       & 78.2     & 85.2& 60.7      & 51.4       & 74.8       & 75.1     & 65.5   & 50.2     & 39.2     & 56.3     & 40.8    & 46.6   & 55.0     & 18.3     & 44.4     & 12.5     & 55.7     & 47.8     & 38.9 \\
mDSDI~\cite{bui2021exploiting} & 87.7  & 80.4  & \textbf{98.1}  & 78.4 & 86.2& 68.1      & 52.1      & 76.0      & 80.4    & 69.2   & 53.2      & 43.3      & 56.7      & 39.2    & 48.1     & 62.1      & 19.1      & 49.4      & 12.8    & 62.9      & 50.4   & 42.8  \\
SWAD~\cite{cha2021swad}   & 89.3  & 83.4  & 97.3  & 82.5  & 88.1& 66.1  & 57.7  & 78.4  & 80.2  & 70.6   & 55.4     & 44.9     & 59.7     & 39.9    & 50.0   & 66.0   & 22.4   & 53.5   & \textbf{16.1}   & 65.8  & 55.5  & 46.5\\
PCL~\cite{yao2022pcl}   & \textbf{90.2}  & 83.9  & \textbf{98.1}  & 82.6  & 88.7& 67.3  & \textbf{59.9}  & \textbf{78.7}  & 80.7  & 71.6   & 58.7     & 46.3     & 60.0     & 43.6    & 52.1   & \textbf{67.9}   & \textbf{24.3}   & 55.3   & 15.7   & 66.6  & 56.4  & 47.7\\
\hline
INSURE                 & \textbf{90.2} &\textbf{85.3} &97.9 &\textbf{83.8} &\textbf{89.3}& \textbf{71.4}     & 57.3     & 78.0     & \textbf{81.2}     & \textbf{72.0}  & 58.8     & \textbf{46.4}  & \textbf{61.7}  & \textbf{45.5}  & \textbf{53.1}   & 67.8   & 24.0   & \textbf{55.6}   & 16.0   & \textbf{67.6}  & \textbf{57.2}  & \textbf{48.0} \\
\hline \hline
\end{tabular}
\end{adjustbox}
\end{table*}

\subsection{Ablation study}
\noindent\textbf{Contribution of each component:} We conduct an extensive ablation study on the PACS dataset to investigate the effectiveness of each component in the INSURE model. In Table~\ref{tab:ablation_result}, the ``Baseline" model applies binary mask as the disentangler and only contains disentanglement loss $\mathcal{L}_{\rm dis}$. Adding each loss term separately to the baseline model improves the performance showing the effectiveness of each component. Specifically, we observe the accuracy of combining $\mathcal{L}_{\rm msr}$ and $\mathcal{L}_{\rm IT}$ to baseline model perform worse than only adding $\mathcal{L}_{\rm msr}$. That means, $\mathcal{L}_{\rm IT}$ only encourages sufficiency, and $\mathbf{z}^*$ still contains superfluous information, which tends to degrade the performance. 
The best performance is achieved by combining all terms together indicating that each loss works as an indispensable component in our framework. To further demonstrate the corporation of each loss term, we visualize the distributions of $\mathbf{z}^*$ in Figure~\ref{fig:t_SNE}, we can see the different classes more distinguishable, \emph{e.g.,} the distance between the person and the other categories is greater than that of other methods, when combining all the terms. In addition, we utilize the visualization technique~\cite{selvaraju2017grad} to present attention maps of the last convolutional layer in terms of $\mathbf{z}^*$ learned by our proposed method with different components in Figure \ref{fig:saliency_map}. It shows that our proposed INSURE is more capable of capturing the entire class-related information than others. Taking the elephant as an example (the first row), INSURE focuses on the whole elephant and some areas near the elephant, indicating that some domain-specific class-relevant information could also improve generalizability.

\begin{table}[htbp]
\centering
\small
\caption{\label{tab:ablation_result} Ablation study of each component in our proposed objective function.}
\begin{adjustbox}{width=\linewidth}
\begin{tabular}{l|ccccc}
\hline \hline
\textbf{Model}   & \textbf{A}   & \textbf{C}   & \textbf{P}  & \textbf{S} & \textbf{Avg}  \\
\hline
ERM~\cite{vapnik1999}& 84.7    & 80.8    & 97.2    & 79.3   & 85.5 \\  
Baseline \small{($w/$ binary mask)}       &87.6    &82.3    &96.8   &80.6 &86.8  \\
\hline
+msr      &88.0     &83.6    &97.0  &81.9 &87.6  \\
+IT       &88.2      &83.0    &96.4   &81.1&87.2     \\
+Puri      &87.5    &82.8     &97.2     &82.0 &87.4     \\
+msr+IT     &88.5     &83.1    &97.0   &81.3&87.5  \\
+msr+Puri      &89.8     &83.4   &97.0   &81.6&87.9     \\
+IT+Puri      &88.5    &83.7      &97.2   &81.9  &87.8     \\
\hline
INSURE (Full Model) &\textbf{90.2}     &\textbf{85.3}     &\textbf{97.9}  &\textbf{83.8} &  \textbf{89.3} \\
\hline \hline
\end{tabular}
\end{adjustbox}
\end{table}


\noindent\textbf{Validation of a binary mask disentangler:} We validate the effectiveness and efficiency of the proposed binary mask disentangler by comparing it with two multi-encoder based models. The first one simply replaces the binary mask disentangler with two MLP encoders in the INSURE model, while the other incorporates two feature extractors ~\cite{bui2021exploiting}. Conventional disentanglers typically incorporate extra parameters because of the multiple encoders, and considerable computational expenses owing to additional losses required to ensure disentangled features are both independent and lossless. In contrast, our binary mask disentangler is simply a learnable vector, which directly guarantees that the disentangled features are orthogonal and lossless. Table~\ref{tab:binary_mask} shows the performance, training time (per step), and the number of parameters, which demonstrates the advantages of the binary mask disentangler.


\begin{table}[ht]
\centering
\small
\caption{\label{tab:binary_mask} Comparison of different disentanglers on PACS.}
\begin{adjustbox}{width=\linewidth}
\begin{tabular}{l|ccc}
\hline \hline
\textbf{Model}   & \textbf{PACS}  & \textbf{Training time}   & \textbf{Params}  \\
\hline
mDSDI~\cite{bui2021exploiting}  &86.2& 2.18s  &55.4M\\
Ours $w/$ two-encoder  &86.9   &1.83s&38.6M\\
\hline
INSURE ($w/$ binary mask) &\textbf{89.3}& \textbf{1.02}s  &\textbf{30.3}M\\
\hline \hline
\end{tabular}
\end{adjustbox}
\end{table}

Furthermore, we evaluate the Performance of different mask types on the PACS benchmark dataset. While the binary mask is usually used in a hard manner, \ie, all elements can only be either 0 or 1, we also evaluate the effectiveness of the soft binary mask. \ie, all elements can be a continuous value between 0 and 1. The hard and soft binary masks are used during the training and inference stage, and we report the results in Table~\ref{hard_soft}. We can see that in the training stage, the hard mask outperforms the soft mask. We argue the reason is that there may exist some trivial solutions, such as all the elements of the mask being the same or very similar non-zero values which make $\mathbf{z}^*$ and $\mathbf{z}'$ highly relevant to each other and thus contain the same information but only different scales. This way, the disentanglement does not work at all. In the inference stage, the performance of using the soft or hard masks is almost equivalent. 

\begin{table}[htbp]
\centering
\caption{\label{tab:mask_result} Performance of different mask types on PACS benchmark dataset.}
\begin{tabular}{l|ccccc}
\hline \hline
\textbf{Training/Inference}   & \textbf{P}   & \textbf{A} & \textbf{C} & \textbf{S}  & \textbf{Avg}  \\
\hline
$Hard/Hard$   &90.2 &85.3 &97.9    &83.8  &89.3\\ 
$Hard/Soft$   & 90.0 &85.2 &97.8 &83.8 &89.2\\
$Soft/Hard$     &89.7  &83.5&97.0&81.7&88.0\\
$Soft/Soft$     &89.8  &83.5&97.0&81.8&88.0\\
\hline \hline
\end{tabular}
\label{hard_soft}
\end{table}

\noindent\textbf{Parameter sensitivity:} In Table~\ref{tab:Hyper-parameters}, we show the sensitivity analysis to the weight parameter of different loss terms, $\alpha$, $\beta$ and $\gamma$ in Eq.~\ref{eq:total_loss}. When we analyze the sensitivity to a specific parameter, the other two keep being selected values, \ie $\alpha=9$, $\beta=1$, and $\gamma=1$. 



\begin{figure*}
\centering
  \includegraphics[width=1\linewidth,height=36mm]{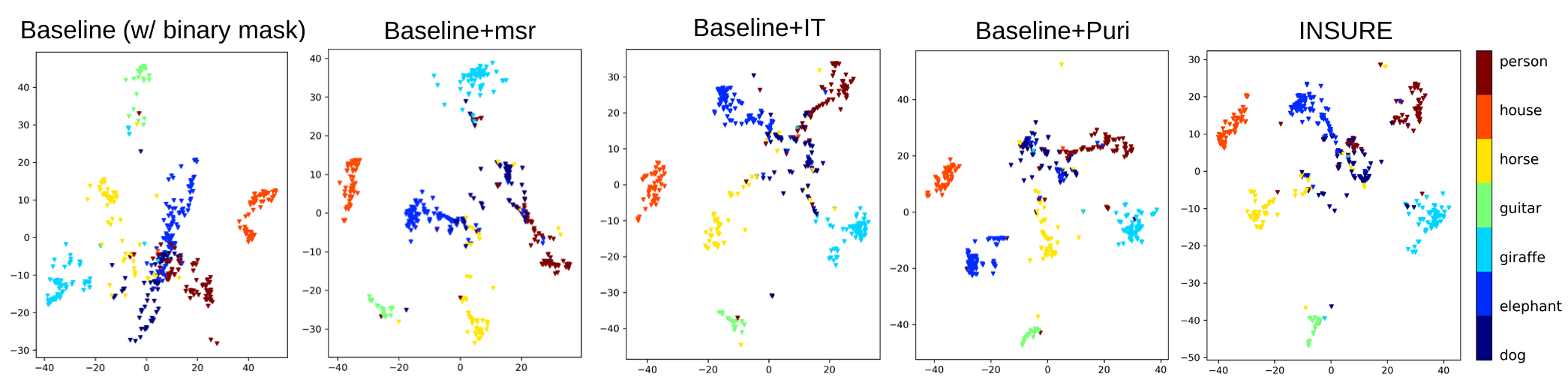}
  \caption{t-SNE~\cite{van2008visualizing} visualization on proposed method with different components. We visualize $\mathbf{z}^*$ on the PACS benchmark, where the target domain is cartoon.}
  \label{fig:t_SNE}
\end{figure*}

\begin{figure}
\centering
\includegraphics[width=\linewidth]{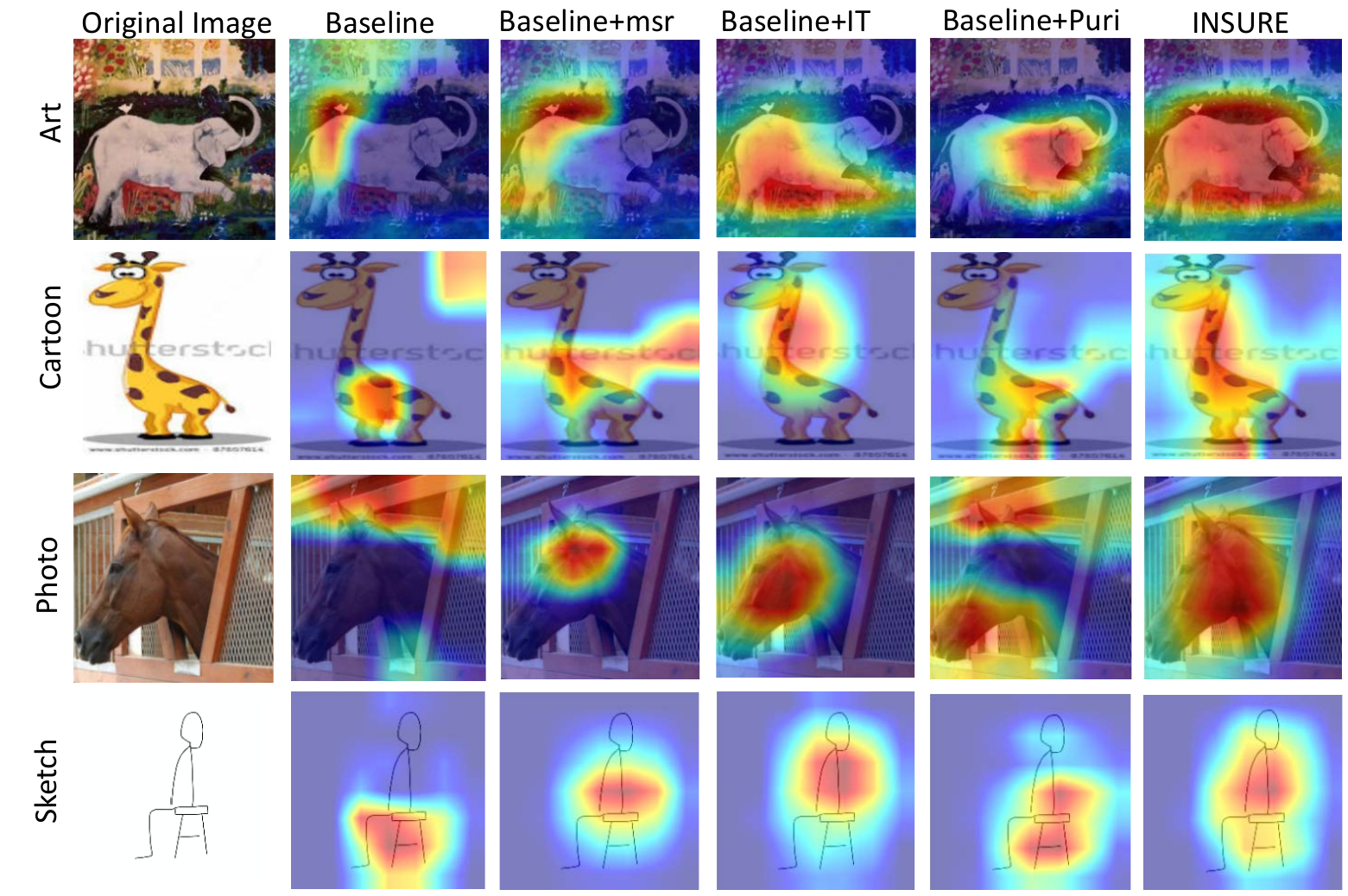}  \caption{\small Attention maps in terms of $\mathbf{z}^*$ on four different target domains of the PACS benchmark by using different components of our proposed method.}
\label{fig:saliency_map}
\end{figure}

\begin{table}[ht]
\caption{\label{tab:Hyper-parameters} Sensitivity analysis of INSURE with respect to parameters $\alpha$, $\beta$ and $\gamma$ in Eq.~\ref{eq:total_loss}. on PACS dataset.}
\centering
\begin{adjustbox}{width=1\linewidth}
\huge
\begin{tabular}{ccccc|ccccc|ccccc}
\hline \hline
\multicolumn{5}{c|}{$\alpha=$} & \multicolumn{5}{c|}{$\beta=$} & \multicolumn{5}{c}{$\gamma=$}\\
\hline
5 &7&9&12&15& 0.5 & 0.7 &1&1.2& 1.5 & 0.5 & 0.7 &1&1.2& 1.5 \\
\hline
88.0&89.0&\textbf{89.3}&88.7&88.1&  
87.6& 88.0& \textbf{89.3}& 89.0& 88.7& 
87.9& 88.7& \textbf{89.3}&88.8&88.0  \\
\hline \hline
\end{tabular}
\end{adjustbox}
\end{table}



\noindent\textbf{Different random seeds}
\begin{table*}
\large
\centering
\caption{\label{tab:full_result}The statistical results of the performance from five trials with different random seeds on PACS, OfficeHome, and TerraLincognita benchmark datasets.}
\begin{adjustbox}{width=\textwidth}
\begin{tabular}{l|cccc|cccc|cccc|cccc|cccc}
\hline \hline
\multirow{3}{*}{PACS}   & \multicolumn{4}{c|}{\textbf{A}} & \multicolumn{4}{c|}{\textbf{C}}  & \multicolumn{4}{c|}{\textbf{P}} & \multicolumn{4}{c|}{\textbf{S}} & \multicolumn{4}{c}{\textbf{Avg}}\\
& min & max  & \multicolumn{2}{c|}{mean/std.} & min & max  & \multicolumn{2}{c|}{mean/std.}  & min & max  & \multicolumn{2}{c|}{mean/std.} 
 & min & max  & \multicolumn{2}{c|}{mean/std.}   & min & max  & \multicolumn{2}{c}{mean/std.} \\
 \cline{2-21}
 & $89.1$ & $90.2$ &  \multicolumn{2}{c|}{ $89.5\pm 0.4$}  & $84.2$ & $86.4$ & \multicolumn{2}{c|}{ $85.5\pm 0.7$ }  & $97.0$ &$97.9$ & \multicolumn{2}{c|}{ $97.3\pm 0.3$ }  & $83.2$ & $85.5$ & \multicolumn{2}{c|}{$84.5\pm 0.5$}  & $88.7$ & $89.6$ & \multicolumn{2}{c}{$89.2\pm 0.3$} \\
\hline \hline
\multirow{3}{*}{OfficeHome}   & \multicolumn{4}{c|}{\textbf{A}} & \multicolumn{4}{c|}{\textbf{C}}  & \multicolumn{4}{c|}{\textbf{P}} & \multicolumn{4}{c|}{\textbf{R}} & \multicolumn{4}{c}{\textbf{Avg}}\\
& min & max  & \multicolumn{2}{c|}{mean/std.} & min & max  & \multicolumn{2}{c|}{mean/std.}  & min & max  & \multicolumn{2}{c|}{mean/std.} 
 & min & max  & \multicolumn{2}{c|}{mean/std.}   & min & max  & \multicolumn{2}{c}{mean/std.} \\
 \cline{2-21}
 & $70.7$& $71.4$ &  \multicolumn{2}{c|}{ $71.1\pm 0.3$}  & $57.1$ & $57.3$& \multicolumn{2}{c|}{ $57.2\pm 0.1$ }  & $78.0$ & $78.6$ & \multicolumn{2}{c|}{ $78.3\pm 0.2$ }  & 81.2 & $82.0$ & \multicolumn{2}{c|}{$81.6\pm 0.3$}  & $72.0$ & $72.1$ & \multicolumn{2}{c}{$72.0\pm 0.1$} \\
 \hline\hline
\multirow{3}{*}{TerraIncognita}   & \multicolumn{4}{c|}{\textbf{L100}} & \multicolumn{4}{c|}{\textbf{L38}}  & \multicolumn{4}{c|}{\textbf{L43}} & \multicolumn{4}{c|}{\textbf{L46}} & \multicolumn{4}{c}{\textbf{Avg}}\\
& min & max  & \multicolumn{2}{c|}{mean/std.} & min & max  & \multicolumn{2}{c|}{mean/std.}  & min & max  & \multicolumn{2}{c|}{mean/std.} 
 & min & max  & \multicolumn{2}{c|}{mean/std.}   & min & max  & \multicolumn{2}{c}{mean/std.} \\
 \cline{2-21}
 & $57.8$ & $58.8$ &  \multicolumn{2}{c|}{ $58.2\pm 0.2$}  & $45.8$ & $50.8$ & \multicolumn{2}{c|}{ $47.3\pm 0.8$ }  & $61.4$ & $62.2$ & \multicolumn{2}{c|}{ $61.7\pm 0.3$ }  & $44.1$ & $45.5$ & \multicolumn{2}{c|}{$45.1\pm 0.6$}  & $52.6$ & $54.1$ & \multicolumn{2}{c}{$53.1\pm 0.5$} \\
 \hline\hline
\end{tabular}
\end{adjustbox}
\end{table*}

\begin{table*}
\Huge
\centering
\caption{\label{tab:full_result_Domainnet}The statistical results of the performance from five trials with different random seeds on DomainNet benchmark dataset.}
\begin{adjustbox}{width=\textwidth}
\begin{tabular}{l|cccc|cccc|cccc|cccc|cccc|cccc|cccc}
\hline \hline
\multirow{3}{*}{DomainNet}   & \multicolumn{4}{c|}{\textbf{clip}} & \multicolumn{4}{c|}{\textbf{info}}  & \multicolumn{4}{c|}{\textbf{paint}} & \multicolumn{4}{c|}{\textbf{quick}} & \multicolumn{4}{c|}{\textbf{real}}& \multicolumn{4}{c|}{\textbf{sketch}}& \multicolumn{4}{c}{\textbf{Avg}}\\
& min & max  & \multicolumn{2}{c|}{mean/std.} & min & max  & \multicolumn{2}{c|}{mean/std.}  & min & max  & \multicolumn{2}{c|}{mean/std.} 
 & min & max  & \multicolumn{2}{c|}{mean/std.}   & min & max  & \multicolumn{2}{c|}{mean/std.}& min & max  & \multicolumn{2}{c|}{mean/std.}& min & max  & \multicolumn{2}{c}{mean/std.} \\
 \cline{2-29}
 & 67.6 & 67.8 &  \multicolumn{2}{c|}{ $67.7\pm 0.1$}  & 23.9 & 24.4 & \multicolumn{2}{c|}{ $24.1\pm 0.2$ }  & 55.5 & 55.7 & \multicolumn{2}{c|}{ $55.6\pm 0.1$ }  & 16.0 & 16.6 & \multicolumn{2}{c|}{$16.3\pm 0.3$}  & 67.5 & 67.6 & \multicolumn{2}{c|}{$67.6\pm 0.0$} & 57.2 & 57.5& \multicolumn{2}{c|}{$57.3\pm 0.1$}& 48.0 & 48.2 & \multicolumn{2}{c}{$48.1\pm 0.1$}\\

\hline \hline
\end{tabular}
\end{adjustbox}
\end{table*}

The training procedure would introduce a certain level of randomness, \eg, the way to split the training and validation set, the order of the data samples for iterations, the initialization of the class label classifier $f$ and domain index classifier $g$, etc. To keep the reproducibility, we fix the random seed to be 0 for all our experiments in the main text. Here, to investigate how our model is sensitive to randomness, we conduct repeat our experiments with random seeds for five trials. The minimal, maximal, mean and standard deviation numbers are reported in Table~\ref{tab:full_result} and Table~\ref{tab:full_result_Domainnet}. Our proposed method is not sensitive to randomness and consistently outperforms the state-of-art methods.

\subsection{Evaluation of Single Domain Generalization}
We also evaluate INSURE model in a more challenging scenario, single-source domain generalization (single-DG), where only one source domain is available for training. Since it lacks domain index information, we simply remove the domain classifier $g$ from our framework, \ie, excluding $\mathcal{L_{\rm CE}} (g(\mathbf{z}');\mathbf{d})$ and $\mathcal{L}_{\rm IT_d}$ from the loss function. Table~\ref{single_source} illustrates INSURE model outperforms two SOTA models, as well as a baseline ERM model, on PACS with ResNet-18 and DomainNet with ResNet-50. 

\begin{table}[htbp]

\centering\small
\caption{\label{single_source}Results of single-source domain generalization.}
\begin{adjustbox}{width=0.32\textwidth}
\begin{tabular}{l|cc}
\hline \hline
\textbf{Algorithm}   & \textbf{PACS}   & \textbf{DomainNet}  \\
\hline
ERM    &60.7&21.7   \\
L2D~\cite{wang2021learning}   &65.2&-\\
MetaCNN~\cite{Wan_2022_CVPR} &-&25.6\\
\hline
INSURE   &\textbf{66.2} & \textbf{26.8} \\
\hline \hline
\end{tabular}
\end{adjustbox}
\end{table}
\subsection{Effectiveness of \textbf{\textit{region }}$\mathbf{III}$}
In this section, we investigate whether the domain-specific and class-relevant features (\textbf{\textit{region }}$\mathbf{III}$) really contribute to the generalization of the unseen target domains by only adjusting the paired purification loss term in the final objective function. As discussed above, in Eq.~\ref{eq:pair_loss_f}, the proposed loss function upon class classifier $f$ make $\mathbf{z}^*$ catch all \textbf{\textit{region }}$\mathbf{III}$. Similarly, we can also make $\mathbf{z}^*$ discard all \textbf{\textit{region }}$\mathbf{III}$ by defining a paired purification loss function upon domain classifier $g$ as follows:
\begin{equation}\label{eq:pair_loss_g}
\footnotesize
\begin{split}
\mathcal{L}^{g}_{\rm puri} = \frac{1}{N}\sum_{i\neq j}^{N} \|g(\mathbf{z}'_i),g(\mathbf{z}'_i+\mathbf{z}^*_j)\|+\|g(\mathbf{z}'_j),g(\mathbf{z}'_j+\mathbf{z}^*_i)\|.
\end{split}
\end{equation}
As shown in Table~\ref{tab:dis_result}, the performance when $\mathbf{z}^*$ contains \textbf{\textit{region }}$\mathbf{III}$ outperforms that when $\mathbf{z}^*$ not contain \textbf{\textit{region }}$\mathbf{III}$, which suggests that domain-specific and class-relevant feature is effective for generalizability.


\begin{table}[htbp]
\centering
\caption{\label{tab:dis_result} Verifying the effectiveness of the domain-specific class-relevant feature (\textbf{\textit{region }}$\mathbf{III}$).}
\begin{adjustbox}{width=.9\linewidth}
\begin{tabular}{l|ccccc}
\hline \hline
\textbf{Model}   & P  & A & C & S  & \textbf{Avg}  \\
\hline
\emph{NOT} contains \textbf{\textit{region }}$\mathbf{III}$       &88.8     &83.8   &97.1  &80.8&87.6   \\
\hline
Contains \textbf{\textit{region }}$\mathbf{III}$ &90.2 &85.3 &97.9    &83.8  &\textbf{89.3}\\
\hline \hline
\end{tabular}
\end{adjustbox}
\end{table}



\section{Conclusion}
In this paper, we proposed the INSURE model to explicitly disentangle the latent features to obtain sufficient and compact (necessary) class-relevant features for domain generalization tasks. We designed a loss function based on information theory to ensure the two disentangled features contain sufficient label and domain information, respectively and further proposed a paired purification loss function to obtain the sufficient and compact (necessary) class-relevant feature. Comprehensive experiments on four DG benchmark datasets showed that our proposed model outperformed the state-of-art methods. We also empirically showed that domain-specific class-relevant feature is beneficial for domain generalization.


{\small
\bibliographystyle{IEEEtran}
\bibliography{egbib}
}

\end{document}